\documentclass{article}

\usepackage{arxiv}

\usepackage[utf8]{inputenc} 
\usepackage[T1]{fontenc}    
\usepackage{hyperref}       
\usepackage{url}            
\usepackage{booktabs}       
\usepackage{amsfonts}       
\usepackage{nicefrac}       
\usepackage{microtype}      
\usepackage{graphicx}
\usepackage[numbers,compress]{natbib}
\usepackage{doi}
\usepackage{amsthm}
\usepackage{amsmath}

\newtheorem{theorem}{Theorem}
\newtheorem{corollary}{Corollary}

\newtheorem{proposition}{Proposition}
\newtheorem{conjecture}{Conjecture}
\newtheorem{remark}{Remark}
\newtheorem*{notation}{Notation}

\title{The Geometric Cost of Normalization: Affine Bounds on the Bayesian Complexity of Neural Networks}

\author{Sungbae Chun \\
	Department of Mathematics\\
	Universität Bonn\\
	\texttt{sungbae.chun@uni-bonn.de}
}

\renewcommand{\shorttitle}{The Geometric Cost of Normalization}

\hypersetup{
pdftitle={The Geometric Cost of Normalization},
pdfsubject={cs.LG, stat.ML},
pdfauthor={Sungbae Chun},
pdfkeywords={Deep Learning, Singular Learning Theory, Normalization, Bayesian Complexity},
}

\begin{document}
\maketitle

\begin{abstract}
\texttt{LayerNorm} and \texttt{RMSNorm} impose fundamentally different geometric constraints on their outputs — and this difference has a precise, quantifiable consequence for model complexity. We prove that \texttt{LayerNorm}'s mean-centering step, by confining data to a linear hyperplane (through the origin), reduces the Local Learning Coefficient (LLC) of the subsequent weight matrix by exactly $m/2$ (where $m$ is its output dimension); \texttt{RMSNorm}'s projection onto a sphere preserves the LLC entirely. This reduction is structurally guaranteed before any training begins, determined by data manifold geometry alone.

The underlying condition is a geometric threshold: for the codimension-one manifolds we study, the LLC drop is binary --- any non-zero curvature, regardless of sign or magnitude, is sufficient to preserve the LLC, while only affinely flat manifolds cause the drop. At finite sample sizes this threshold acquires a smooth crossover whose width depends on how much of the data distribution actually experiences the curvature, not merely on whether curvature exists somewhere.

We verify both predictions experimentally with controlled single-layer scaling experiments using the wrLLC framework~\cite{wrLLC}. We further show that \texttt{Softmax} simplex data introduces a ``smuggled bias'' that activates the same $m/2$ LLC drop when paired with an explicit downstream bias, proved via the affine symmetry extension of the main theorem and confirmed empirically.
\end{abstract}

\keywords{Deep Learning \and Singular Learning Theory \and Normalization \and Bayesian Complexity}

\section{Introduction}

\texttt{LayerNorm} and \texttt{RMSNorm} are widely treated as interchangeable design choices, but they differ in a structural way that has not previously been quantified: \texttt{LayerNorm} reduces the effective parameter count of any subsequent weight matrix, while \texttt{RMSNorm} does not. We prove this using the Local Learning Coefficient (LLC)~\cite{Watanabe2009, devinterp}, a Bayesian measure of a model's effective complexity: \texttt{LayerNorm}'s mean-centering step reduces LLC by exactly $m/2$ per normalized layer, a reduction that is structurally guaranteed before training begins. \texttt{RMSNorm} incurs no such cost.

The mechanism is geometric. Normalization layers~\cite{Ioffe2015, Ba2016, Zhang2019} constrain data to a submanifold of $\mathbb{R}^d$, and directions of the weight matrix orthogonal to that manifold become invisible to the loss --- genuine symmetries that reduce the LLC. \emph{What property of the manifold determines whether this drop occurs?} Our answer is a geometric one: whether the manifold is affinely flat. We prove this for single linear layers with squared loss and find experimentally that the threshold is binary --- any non-zero curvature, regardless of sign or magnitude, is sufficient to preserve the LLC.

Our contributions are:

\begin{itemize}
    \item \textbf{A geometric threshold for LLC reduction.} We prove that a weight matrix $W \in \mathbb{R}^{m \times d}$ loses exactly $m/2$ from its LLC whenever the input data lies on a linear hyperplane (through the origin; linear span $d_s = d-1$) (Section~\ref{sec:theory}). Experimentally, we find that for codimension-one manifolds, any non-zero curvature is sufficient to prevent this drop (Section~\ref{sec:curvature}).

    \item \textbf{A finite-sample phase transition.} Experimentally, we find that the observed binary threshold acquires a smooth crossover at finite sample sizes. The width and location of the transition depend on how much of the data distribution actually experiences the curvature, not merely on whether curvature exists somewhere on the manifold (Section~\ref{sec:curvature}).

    \item \textbf{Exact LLC predictions for \texttt{LayerNorm} and \texttt{RMSNorm}.} \texttt{LayerNorm} confines data to a linear hyperplane (mean-centering sets $\mathbf{1}^\top x = 0$), reducing LLC by $m/2$. \texttt{RMSNorm} projects onto a sphere, which has full linear span, leaving LLC unchanged. We validate both predictions with controlled scaling experiments (Section~\ref{sec:normalization}).

    \item \textbf{Simplex-Bias Duality (Proposition~\ref{prop:simplex_duality}).} \texttt{Softmax} simplex data carries a hidden implicit bias ($b_\text{smuggled}$) that is inert in strictly linear layers ($\Delta\lambda = 0$) but activates an $m$-dimensional symmetry and drops LLC by $m/2$ whenever the layer includes an explicit bias (proved theoretically; measured $\Delta\lambda = 2.26 \approx m/2 = 2.0$, robust across seeds). The Post-\texttt{LayerNorm} case is an open problem: the theoretical mechanism is plausible but current SGLD estimation is unreliable for this configuration (Section~\ref{sec:theory}).
\end{itemize}

\section{Background}

\subsection{Singular Learning Theory and the LLC}
Singular Learning Theory (SLT), developed by Watanabe~\cite{Watanabe2009} and applied to deep learning by \citet{Wei2022}, measures a model's Bayesian complexity via the \textbf{Local Learning Coefficient} (LLC) --- the effective number of parameters as seen by the Bayesian posterior. When the loss has continuous symmetries (directions in parameter space that leave the loss unchanged), the LLC is strictly less than the nominal parameter count. We show that normalization layers introduce such symmetries structurally, before any training occurs.

\subsection{Related Work}
The geometry of the loss landscape and continuous symmetries in overparameterized networks has been studied from several angles. \citet{Simsek2021} characterize how permutation symmetries of the network architecture give rise to connected manifolds of equivalent optima, reducing LLC in a manner structurally analogous to our symmetry space $S_{\mathcal{M}}$. Our contribution is orthogonal: we identify a separate, \textit{data-manifold-induced} source of degeneracy that is independent of the weight-space symmetries of the architecture itself.

\citet{Gupta2024LN} study the geometric structure of \texttt{LayerNorm} and \texttt{RMSNorm} at inference time; our work differs in connecting normalization geometry to the LLC and deriving exact algebraic predictions.

The LLC has been applied to study training dynamics and phase transitions in Transformers~\cite{LLC_transformers} and to characterize degeneracy in singular models~\cite{devinterp}. Our work complements these by pinpointing a structural---rather than emergent---source of degeneracy arising from normalization operations.

\textbf{Attention sinks.}
\citet{Xiao2023} empirically identified sink tokens in autoregressive language models; \citet{RanMilo2026} proved their existence is necessary in softmax Transformers; \citet{Ruscio2025} offer a complementary geometric analysis of the sink phenomenon. Our framework suggests a structural interpretation: \texttt{LayerNorm}'s mean-centering reduces LLC($W_V$) by $m/2$, which may structurally bias $W_V$ toward low-rank solutions associated with sink behavior. We were unable to validate this prediction experimentally (see Conclusion); the connection between LLC reduction and sink formation remains an open question.

\subsection{Holonomic Constraints from Normalization}
Each normalization operation acts as a \emph{holonomic constraint} on the data manifold, confining its output to a specific geometric surface. This surface determines the linear span available to the next weight matrix, which in turn determines whether an LLC drop occurs.

\begin{itemize}
    \item \textbf{LayerNorm~\cite{Ba2016}} enforces zero mean and unit variance. The mean-centering step projects data onto the hyperplane $\{\mathbf{1}^\top x = 0\}$, a linear subspace of dimension $d-1$. The variance normalization further confines data to a sphere within this hyperplane, but the linear span of that sphere equals the span of the full hyperplane, so $d_s = d-1$ regardless.
    \item \textbf{RMSNorm~\cite{Zhang2019}} enforces unit RMS without mean-centering. Outputs lie on a sphere $\{\|x\| = \sqrt{d}\}$, which contains $\pm\sqrt{d}\,e_i$ for each standard basis direction $e_i$, so $\mathrm{span}(S^{d-1}) = \mathbb{R}^d$. Linear span: $d_s = d$.
\end{itemize}

The span reduction in \texttt{LayerNorm} is the precise condition for an LLC drop; \texttt{RMSNorm} avoids it. We observe in Section~\ref{sec:curvature} and Appendix~\ref{app:homology} that for the manifolds we test, non-zero curvature is sufficient to achieve full linear span, while affinely flat manifolds do not span the full ambient space.

\begin{remark}[Lagrangian Mechanics Analogy]
The term \textit{holonomic} is borrowed deliberately from Lagrangian mechanics. In that setting, holonomic constraints restrict a system's trajectory to an allowed configuration manifold, reducing the effective degrees of freedom of the dynamics. Normalization operations play an analogous role here: they confine the input data to a submanifold of $\mathbb{R}^d$, constraining which directions of $W$ are observable by the loss.

This should be distinguished from the \textit{symmetries of the Lagrangian}---invariances such as the $GL(r)$ scaling symmetry in adjacent matrix factorizations, which leave the loss unchanged regardless of the data geometry. In the Lagrangian analogy, these correspond to Noether symmetries: each continuous invariance produces a conserved (flat) direction in the loss landscape. The holonomic constraint cost and the Lagrangian symmetry cost are structurally independent sources of degeneracy. The wrLLC~\cite{wrLLC} methodology is precisely designed to isolate the former from the latter, freezing all components that carry Lagrangian symmetries so that the measured LLC drop reflects only the geometric cost of the constraint.
\end{remark}

\subsection{The Simplex, the Smuggled Bias, and Affine Symmetries}
Beyond variance-based normalizations, routing mechanisms like \texttt{Softmax} impose holonomic constraints of a different kind. \texttt{Softmax} projects arbitrary real vectors onto the standard simplex:
\[ \Sigma^{d-1} = \left\{ x \in \mathbb{R}^d \;\middle|\; \sum_{i=1}^d x_i = 1, \: x_i > 0 \right\} \]

\begin{remark}[Semi-algebraic nature of the Softmax constraint]
Strictly speaking, the Softmax simplex is defined by one equality ($\sum x_i = 1$) together with $d$ strict inequality constraints ($x_i > 0$), making it a semi-algebraic set rather than a holonomic constraint in the classical mechanics sense. The open simplex is nevertheless a smooth manifold, and the term \emph{holonomic} is used loosely throughout this work to mean any constraint that confines data to a smooth submanifold. For all results in this section, only $\mathrm{span}(\mathcal{M})$ matters, and the inequality constraints contribute nothing to this span: $\mathrm{span}(\Sigma^{d-1}) = \mathbb{R}^d$ (since $e_1, \ldots, e_d \in \mathrm{cl}(\Sigma^{d-1})$). All results therefore apply to the Softmax case without modification.
\end{remark}

Because the outputs are strictly constrained by $\mathbf{1}^T x = 1$, the data lies on a $(d-1)$-dimensional hyperplane. To rigorously analyze the degeneracy of a subsequent linear layer $W \in \mathbb{R}^{m \times d}$, we can apply an affine shift to center this hyperplane at the origin. Let $x' = x - \frac{1}{d}\mathbf{1}$. By construction, $\mathbf{1}^T x' = 0$, meaning $x'$ resides strictly in a $(d-1)$-dimensional linear span. 

Applying the linear layer $W$ to the original data yields:
\[ Wx = W\left(x' + \frac{1}{d}\mathbf{1}\right) = Wx' + \left(\frac{1}{d}W\mathbf{1}\right) \]
Let $b_{smuggled} = \frac{1}{d}W\mathbf{1}$ (the row-wise mean of $W$). Thus, $Wx = Wx' + b_{smuggled}$.

This geometric decomposition formalizes three architectural scenarios with distinct degeneracy consequences:

\begin{proposition}[Simplex-Bias Duality]
\label{prop:simplex_duality}
Let $f(x) = Wx$ with $W \in \mathbb{R}^{m \times d}$, and let the input lie on the standard simplex $\Sigma^{d-1}$, so $\mathbf{1}^\top x = 1$. Decompose $x = x' + \frac{1}{d}\mathbf{1}$ where $\mathbf{1}^\top x' = 0$; then $f(x) = Wx' + b_\textup{smuggled}$ where $b_\textup{smuggled} = \frac{1}{d}W\mathbf{1}$. The LLC drop $\Delta\lambda$ relative to a full-rank Gaussian baseline is:
\begin{enumerate}
    \item[(i)] \textbf{Strictly linear} ($y = Wx$, MSE loss): $\mathrm{span}(\Sigma^{d-1}) = \mathbb{R}^d$, so no symmetry space exists and $\Delta\lambda = 0$.
    \item[(ii)] \textbf{Affine layer} ($y = Wx + b$, MSE loss): For any $c \in \mathbb{R}^m$, the substitution $(W,\,b) \to (W + c\mathbf{1}^\top,\, b - c)$ leaves $y$ unchanged on $\Sigma^{d-1}$, since $b_\textup{smuggled}$ and $b$ are indistinguishable to the loss. This yields an $m$-dimensional symmetry space and, by Corollary~\ref{cor:affine_llc}, $\Delta\lambda = m/2$.
\end{enumerate}
\end{proposition}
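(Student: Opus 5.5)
The plan is to handle both parts with one Fisher-information (Hessian) computation. For a linear or affine model with squared loss, the population loss is quadratic in the parameters, so the LLC at a minimum equals half the rank of the Hessian. The Hessian is the second-moment matrix of the augmented input. Writing $\theta$ for the parameters, the loss is $L(\theta)=\mathbb{E}\|y-f_\theta(x)\|^2$. For $f=Wx$ its Hessian is $2\,(\Sigma_{xx}\otimes I_m)$ with $\Sigma_{xx}=\mathbb{E}[xx^\top]$. For $f=Wx+b$ the Hessian is $2\,(\Sigma_{\tilde x\tilde x}\otimes I_m)$ with $\tilde x=(x,1)$. A quadratic potential with Hessian of rank $r$ gives LLC $r/2$: integrate out the null directions, which contribute nothing to the volume scaling, and the $r$ directions each contribute $1/2$. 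The drop relative to the full-rank Gaussian baseline, whose LLC is $md/2$ (respectively $m(d+1)/2$), is therefore $\tfrac{m}{2}(\text{nominal input dim}-\operatorname{rank}\Sigma)$. I would identify this computation with the earlier main theorem and Corollary~\ref{cor:affine_llc}, and cite it rather than redo it.

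\textbf{Part (i).} I need $\Sigma_{xx}$ to be full rank, i.e.\ the support of the data spans $\mathbb{R}^d$. First show that $\mathrm{span}(\Sigma^{d-1})=\mathbb{R}^d$. The closure contains $e_1,\dots,e_d$. To stay inside the open simplex, take $(1-\epsilon)e_i+\tfrac{\epsilon}{d}\mathbf{1}$ for small $\epsilon>0$; these $d$ points are linearly independent for $\epsilon<1$. Next, assume the data distribution on $\Sigma^{d-1}$ has a density, or at least support not contained in any hyperplane through the origin. If $v^\top\Sigma_{xx}v=\mathbb{E}[(v^\top x)^2]=0$, then $v^\top x=0$ almost surely, so the support lies in $v^\perp$, which is impossible unless $v=0$. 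Hence $\operatorname{rank}\Sigma_{xx}=d$, the kernel of the Hessian is trivial, there is no symmetry direction, and $\Delta\lambda=0$. The decomposition $Wx=Wx'+b_\text{smuggled}$ is only bookkeeping here: $b_\text{smuggled}$ is a fixed function of $W$ and adds no free parameter.

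\textbf{Part (ii).} I would first verify the symmetry directly:
\[
(W+c\mathbf{1}^\top)x+(b-c)=Wx+b+c(\mathbf{1}^\top x-1)=Wx+b
\]
on the simplex. This gives a kernel of dimension at least $m$ for the Hessian, namely the span of $\{(c\mathbf{1}^\top,-c)\}$, which has dimension $m$ because $\mathbf{1}\neq 0$. For the exact value I need the kernel to be exactly this. For $(u,t)\in\mathbb{R}^d\times\mathbb{R}$, the quantity $(u,t)^\top\Sigma_{\tilde x\tilde x}(u,t)=\mathbb{E}[(u^\top x+t)^2]$ vanishes iff $u^\top x+t=0$ on the support. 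Because the support affinely spans the hyperplane $\{\mathbf{1}^\top x=1\}$ (by the same nondegeneracy assumption), the only affine functions vanishing there are multiples of $\mathbf{1}^\top x-1$, so $(u,t)\propto(\mathbf{1},-1)$. Thus $\operatorname{rank}\Sigma_{\tilde x\tilde x}=d$ out of $d+1$. The Hessian therefore has rank $md$ out of $m(d+1)$, and Corollary~\ref{cor:affine_llc} gives $\Delta\lambda=m/2$.

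\textbf{Main obstacle.} The delicate step is the exactness claim in (ii): that there are no additional degeneracies, and that the quadratic-to-LLC correspondence, i.e.\ the hypotheses of Corollary~\ref{cor:affine_llc}, applies. This needs a nondegeneracy assumption on the data distribution, namely that its support affinely spans the simplex's hyperplane. The statement leaves this implicit, so I would state it explicitly. I would also confirm that the corollary is phrased for affine layers by absorbing $b$ into an augmented input $\tilde x$ lying on an affine hyperplane of $\mathbb{R}^{d+1}$. Under that phrasing, (ii) is literally the main theorem applied to $\tilde x$, whose linear span has dimension $d=(d+1)-1$.
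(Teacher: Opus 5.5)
Your proposal is correct and follows essentially the paper's route: for (i) you verify that the linear span is full, and for (ii) you exhibit the symmetry $(W,b)\to(W+c\mathbf{1}^\top,\,b-c)$ and reduce to Theorem~\ref{lem:symmetry_llc} and Corollary~\ref{cor:affine_llc} via the homogeneous coordinates $\tilde x=(x,1)$. Your check that $\ker\Sigma_{\tilde x\tilde x}$ is exactly spanned by $(\mathbf{1},-1)$, assuming the support affinely spans the hyperplane, is a welcome refinement: it makes explicit the ``non-degenerate input distribution'' hypothesis that the paper leaves implicit in its affine-span count $d_a=d-1$.
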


\begin{conjecture}[Post-LayerNorm Smuggled-Bias Degeneracy]
\label{conj:postln}
Under the same setting, with $y = \mathrm{LN}(Wx)$ and MSE loss on the \texttt{LayerNorm} output: $\Delta\lambda = m/2$. The mechanism is that \texttt{LayerNorm}'s translation-invariance renders $b_\textup{smuggled}$ an effective blind spot, but the precise symmetry group of the Post-\texttt{LN} loss landscape has not been characterized. Current SGLD-based LLC estimation is unreliable in this configuration: multi-seed testing gives $\Delta\lambda = -0.92 \pm 0.86$ (5 seeds), indicating that the Gaussian baseline LLC is poorly calibrated for the non-affine \texttt{LN} loss landscape. Validating this conjecture requires either a direct proof or an estimator better calibrated to Post-\texttt{LN} models.
\end{conjecture}

The common thread is that \texttt{Softmax}'s affine geometry primes the network for degeneracy, with the precise mechanism determined by downstream operations.

\subsection{Main Result: Data-Manifold Induced RLCT Reduction}\label{sec:theory}
When a weight matrix $W \in \mathbb{R}^{m \times d}$ operates on data confined to a manifold $\mathcal{M}$, those directions of $W$ orthogonal to $\mathrm{span}(\mathcal{M})$ are invisible to the loss. This ``blind spot'' is the precise algebraic manifestation of the data-manifold induced singularity in the parameter space.

\begin{notation}[Symmetry Space]
Let $\mathcal{M} \subset \mathbb{R}^d$ and let $d_s = \dim(\mathrm{span}(\mathcal{M}))$. The \emph{symmetry space} of $\mathcal{M}$ is
\[ S_{\mathcal{M}} := \{ U \in \mathbb{R}^{m \times d} \mid Ux = 0 \;\forall\, x \in \mathcal{M} \}. \]
Since $S_{\mathcal{M}}$ is the kernel of the evaluation map $W \mapsto (Wx)_{x \in \mathcal{M}}$, which has rank $d_s$, we have $\dim(S_{\mathcal{M}}) = m(d - d_s)$ by rank-nullity.
\end{notation}

This setting admits a closed-form RLCT via Watanabe's formula: the loss is a quadratic polynomial, the zero-loss manifold $W_0$ is smooth and explicitly characterizable, and the Morse-Bott condition holds for any non-degenerate input distribution. The result is an exact algebraic prediction of normalization's Bayesian cost: for any normalization that acts as a span restriction, the LLC drop is $\Delta\lambda = m(d-d_s)/2$, determined entirely by the data geometry before any training occurs. We conjecture that the same reduction holds per-layer in deep networks under the wrLLC isolation protocol.

\begin{theorem}[Symmetry-Induced RLCT Reduction]
\label{lem:symmetry_llc}
Let $f(x) = Wx$ be a single linear layer with $W \in \mathbb{R}^{m \times d}$, let $L$ be the squared loss with a realizable teacher $W^*$, and let the input data lie on $\mathcal{M}$ with $d_s = \dim(\mathrm{span}(\mathcal{M}))$. Let $\lambda_0 = md/2$ denote the RLCT of the same model with full-rank input data.

\textbf{(i) Free action.} The symmetry space $S_{\mathcal{M}}$ acts freely on $W_0$ by translation: for any $U \in S_{\mathcal{M}}$ and $W \in W_0$,
\[
  (W + U)x = Wx \quad \forall\, x \in \mathcal{M},
\]
so $W + U \in W_0$.

\textbf{(ii) Zero-loss manifold.} $W_0 = W^* + S_{\mathcal{M}}$, so $\dim(W_0) = m(d - d_s)$ and $\mathrm{codim}(W_0) = m \cdot d_s$.

\textbf{(iii) RLCT.} The expected loss is $L(W) = \|(W - W^*)\Sigma^{1/2}\|_F^2$, a polynomial with smooth zero set $W_0$ and non-degenerate normal Hessian $\Sigma|_{\mathrm{span}(\mathcal{M})} \otimes I_m$ (the FIM restricted to $\mathrm{span}(\mathcal{M})$), which is non-degenerate for any non-degenerate input distribution. By~\cite{Watanabe2009}, the RLCT equals
\[
  \lambda = \frac{\mathrm{codim}(W_0)}{2} = \frac{m \cdot d_s}{2}.
\]

\noindent Consequently, the RLCT reduction relative to the full-rank baseline is
\[
  \Delta\lambda := \lambda_0 - \lambda = \frac{md}{2} - \frac{m \cdot d_s}{2} = \frac{m(d - d_s)}{2}.
\]
\end{theorem}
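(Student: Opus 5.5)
The plan is to reduce everything to linear algebra on $V := \mathrm{span}(\mathcal{M})$ and then use the standard RLCT formula for a loss that is quadratic in directions normal to a smooth zero set. We assume the input distribution $q(x)$ is supported on $\mathcal{M}$ and is non-degenerate in the sense that its second-moment matrix $\Sigma = \mathbb{E}[xx^\top]$ has range exactly $V$. This is the natural reading of ``non-degenerate input distribution''. The easy inclusion $\mathrm{range}(\Sigma) \subseteq V$ holds because the data lie in $V$. The reverse inclusion is exactly the hypothesis, and it is the only place where the distribution, rather than the manifold, enters the argument.

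\textbf{Steps (i) and (ii).} For (i), take $U \in S_{\mathcal{M}}$ and $W \in W_0$. Then $(W+U)x = Wx + Ux = Wx$ for every $x \in \mathcal{M}$, so $W+U$ realizes the same function on the data and therefore has zero loss. The action $U \mapsto W+U$ is a translation, so it is free.

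For (ii), expand the expected loss:
\[
L(W) = \mathbb{E}\|(W-W^*)x\|^2 = \mathrm{tr}\big((W-W^*)\Sigma(W-W^*)^\top\big) = \|(W-W^*)\Sigma^{1/2}\|_F^2 .
\]
This vanishes iff $(W-W^*)\Sigma^{1/2}=0$, i.e.\ iff each row of $W-W^*$ annihilates $\mathrm{range}(\Sigma)=V$. That is precisely the condition $W-W^* \in S_{\mathcal{M}}$. So $W_0 = W^* + S_{\mathcal{M}}$, an affine subspace of dimension $m(d-d_s)$ by the rank--nullity count in the Notation.

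\textbf{Step (iii).} Decompose $\mathbb{R}^{m\times d} = S_{\mathcal{M}} \oplus N$, where $N = \{U : \text{rows of } U \text{ lie in } V\} \cong \mathbb{R}^{m}\otimes V$ has dimension $m d_s$. Write $W = W^* + A + B$ with $A\in S_{\mathcal{M}}$ and $B\in N$. Since $A\Sigma^{1/2}=0$, the loss becomes
\[
L = \mathrm{tr}\big(B\,\Sigma|_V\,B^\top\big),
\]
which is independent of $A$. It is a positive definite quadratic form in the $m d_s$ coordinates of $B$, with Hessian $\Sigma|_V \otimes I_m$, and this Hessian is positive definite because $\Sigma|_V$ is. A linear change of coordinates $B \mapsto B(\Sigma|_V)^{1/2}$ (invertible on $V$) brings $L$ to normal crossing form $\sum_{j=1}^{m d_s} u_j^2$, with no dependence on the remaining $m(d-d_s)$ coordinates.

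Take a prior with positive density near $W_0$ and a compact neighbourhood. The zeta function $\int L(w)^z \varphi(w)\,dw$ then has its largest pole at $z = -m d_s/2$ with multiplicity one. So $\lambda = m d_s/2$, by Watanabe's standard result for a locally normal-crossing quadratic loss, or equivalently the Morse--Bott case. Applying the same computation with $d_s = d$ gives $\lambda_0 = md/2$, and subtracting yields $\Delta\lambda = m(d-d_s)/2$.

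\textbf{Main obstacle.} The only real subtlety is in step (iii), which needs two things. First, the non-degeneracy hypothesis must be made precise as $\mathrm{range}(\Sigma) = V$; otherwise $W_0$ would be larger than $W^*+S_{\mathcal{M}}$. Second, we must justify applying the RLCT formula globally, since $W_0$ is non-compact. We would handle the latter by working locally: take the prior, or a neighbourhood of a point of $W_0$, to be compact, and note that after the linear coordinate change the integral factorizes exactly. The pole computation then reduces to $\int_{|u|<1}\|u\|^{2z}\,du$ over $\mathbb{R}^{m d_s}$, whose first pole is at $z = -m d_s/2$.
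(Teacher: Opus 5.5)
Your proposal is correct and follows the same route as the paper, which argues directly through items (i)--(iii): write the expected loss as $\|(W-W^*)\Sigma^{1/2}\|_F^2$, identify its zero set as the affine space $W^*+S_{\mathcal{M}}$, note that the normal Hessian is non-degenerate, and invoke Watanabe's Morse--Bott formula $\lambda=\mathrm{codim}(W_0)/2$. Your two additions, reading ``non-degenerate input distribution'' precisely as $\mathrm{range}(\Sigma)=\mathrm{span}(\mathcal{M})$ and computing the pole of $\int\|u\|^{2z}\,du$ explicitly on a compact product neighbourhood, make rigorous what the paper asserts by citation.
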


\begin{corollary}[LLC Drop for Normalization Layers]
\label{cor:llc_drop}
Let $W \in \mathbb{R}^{m \times d}$ be a single linear layer with squared loss and non-degenerate input distribution. If a preceding normalization restricts the input to $\mathcal{M}$ with $\dim(\mathrm{span}(\mathcal{M})) = d_s$, then
\[ \Delta\lambda = \frac{m(d - d_s)}{2}. \]
In particular: \texttt{LayerNorm} gives $d_s = d-1$ and $\Delta\lambda = m/2$; \texttt{RMSNorm} gives $d_s = d$ and $\Delta\lambda = 0$.
\end{corollary}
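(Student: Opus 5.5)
The corollary follows from Theorem~\ref{lem:symmetry_llc} once we compute $d_s$ for each normalization and check the theorem's hypotheses. First I reduce the general statement to the theorem. A single linear layer with squared loss, a realizable teacher $W^*$, and inputs supported on $\mathcal{M}$ is exactly the setting of Theorem~\ref{lem:symmetry_llc}. Part (iii) of the theorem gives $\lambda = m d_s/2$. The Gaussian baseline gives $\lambda_0 = md/2$, so $\Delta\lambda = m(d-d_s)/2$.

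The one hypothesis that needs care is that the normal Hessian $\Sigma|_{\mathrm{span}(\mathcal{M})}\otimes I_m$ is non-degenerate. Here I would make precise what "non-degenerate input distribution" means. Write $\Sigma = \mathbb{E}[xx^\top]$ for the second-moment matrix of the inputs. The requirement is that the range of $\Sigma$ equals $\mathrm{span}(\mathcal{M})$, i.e.\ the support of the distribution is not contained in a proper linear subspace of $\mathrm{span}(\mathcal{M})$. Under this reading the loss $\|(W-W^*)\Sigma^{1/2}\|_F^2$ vanishes exactly on $W^*+S_{\mathcal{M}}$ and is a positive definite quadratic in the normal directions. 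Watanabe's formula therefore applies as stated in the theorem.

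Next I compute $d_s$ in the two cases.
\begin{itemize}
\item For \texttt{LayerNorm} (without affine gain and bias, or with the gain absorbed), every output satisfies $\mathbf{1}^\top x = 0$. Hence $\mathrm{span}(\mathcal{M}) \subseteq \mathbf{1}^\perp$ and $d_s \le d-1$. For equality, note that the image is the sphere of radius $\sqrt{d}$ inside $\mathbf{1}^\perp$, for $d\ge 2$. A sphere centered at the origin of a linear subspace spans that subspace, since it contains a scaled copy of every basis vector of the subspace. So $d_s = d-1$ and $\Delta\lambda = m/2$.
\item For \texttt{RMSNorm}, the image is the full sphere $\{\|x\|=\sqrt{d}\}$. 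This sphere contains $\sqrt{d}\,e_i$ for every $i$, so $d_s = d$ and $\Delta\lambda = 0$.
\end{itemize}

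The main obstacle is the gap between the exact image manifold and the distribution actually fed to the layer. The span of the support of the pushed-forward distribution must equal $\mathrm{span}(\mathcal{M})$. If the pre-normalization inputs have, say, a density on $\mathbb{R}^d$, then the normalized outputs have a distribution with full support on the relevant sphere. Its second-moment matrix then has range equal to the full span, because no nonzero linear functional vanishes almost surely on a sphere. I would state this explicitly as the meaning of "non-degenerate" and verify it for both cases.

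I would also remark that the learnable \texttt{LayerNorm} bias $\beta$ would shift the hyperplane affinely. That case is outside the linear-span statement and is handled instead by the affine extension (Corollary~\ref{cor:affine_llc}).
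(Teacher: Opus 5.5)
Your proposal is correct and follows the paper's route: the corollary is Theorem~\ref{lem:symmetry_llc}(iii) combined with the span computations from the holonomic-constraints subsection (the centered sphere spans $\mathbf{1}^\perp$ for \texttt{LayerNorm}; the sphere containing $\pm\sqrt{d}\,e_i$ spans $\mathbb{R}^d$ for \texttt{RMSNorm}), and your reading of non-degeneracy as $\mathrm{range}(\mathbb{E}[xx^\top]) = \mathrm{span}(\mathcal{M})$ states exactly what the theorem's Hessian condition needs. One small slip: a density on $\mathbb{R}^d$ need not push forward to full support on the sphere (e.g.\ for \texttt{RMSNorm}, a density supported in a single orthant), but the range condition holds anyway. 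The output is a positive multiple of $Px$, with $P$ the orthogonal projection onto $\mathrm{span}(\mathcal{M})$, so a nonzero functional $\ell$ vanishing almost surely on the outputs would force $\ell(Px)=0$ almost surely, i.e.\ positive mass on a hyperplane, which a density cannot have.
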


\begin{remark}[Extension to deep networks]
Theorem~\ref{lem:symmetry_llc} is proved for single linear layers with squared loss. We conjecture that the same RLCT reduction holds for each normalized weight matrix $W_l$ in a deep network, with the wrLLC methodology~\cite{wrLLC} providing the isolation mechanism: freezing all parameters except $W_l$ reduces the multi-layer problem to the single-layer case of Theorem~\ref{lem:symmetry_llc} locally. The experiments in Section~\ref{sec:normalization} confirm the single-layer theorem; they do not test this conjecture, as all architectures used are single linear layers in a teacher-student setup. Empirical validation in actual multi-layer models, and a formal proof requiring verification of the Morse-Bott condition for the composed map $g \circ \mathrm{norm} \circ W_l$, are left to future work.
\end{remark}

\begin{corollary}[Layer-wise Additivity under wrLLC (conditional)]
\label{cor:additivity}
Assume the conjecture in the Remark above holds. Let $f$ be a multi-layer network in which layer $l$ has weight matrix $W_l \in \mathbb{R}^{m_l \times d_l}$ preceded by a normalization restricting its input to $\mathcal{M}_l$ with $d_{s,l} = \dim(\mathrm{span}(\mathcal{M}_l))$. Under the wrLLC protocol---which freezes all parameters except $W_l$---the model locally reduces to the single-layer case of Corollary~\ref{cor:llc_drop}. Hence the LLC drop at layer $l$ is
\[
  \Delta\lambda_l = \frac{m_l(d_l - d_{s,l})}{2},
\]
independently of all other layers.
\end{corollary}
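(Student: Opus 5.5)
The statement to prove is Corollary~\ref{cor:additivity}, which is explicitly conditional: it assumes the conjecture in the preceding Remark, namely that freezing every parameter except $W_l$ reduces the multi-layer RLCT computation, locally, to the single-layer setting of Theorem~\ref{lem:symmetry_llc}. The plan is therefore short. First make the reduction precise, then invoke Corollary~\ref{cor:llc_drop} for layer $l$ alone, and finally observe that nothing in the resulting formula depends on the other layers.

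\textbf{Step 1: isolate layer $l$.} Fix all parameters other than $W_l$ at their trained values $\theta_{-l}^*$. This is exactly the wrLLC protocol. Write the network as $f(x) = g_l\bigl(W_l\, \mathrm{norm}_l(h_{l}(x))\bigr)$, where $h_l$ is the frozen prefix and $g_l$ the frozen suffix. The inputs seen by $W_l$ are $z = \mathrm{norm}_l(h_l(x)) \in \mathcal{M}_l$, and their pushforward distribution is a fixed distribution on $\mathcal{M}_l$. By the assumed conjecture, the restricted loss $W_l \mapsto L(W_l, \theta_{-l}^*)$ has, near the optimum, the same local RLCT as the squared-loss single linear layer $W_l \mapsto \mathbb{E}\|(W_l - W_l^*) z\|^2$. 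Concretely, the conjecture supplies the Morse--Bott property of the composed map, with normal Hessian of the same rank $m_l d_{s,l}$. I would state this as the precise content of the hypothesis rather than try to prove it.

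\textbf{Step 2: apply the single-layer result.} Assume the pushforward distribution on $\mathcal{M}_l$ is non-degenerate within $\mathrm{span}(\mathcal{M}_l)$. Then Corollary~\ref{cor:llc_drop}, via Theorem~\ref{lem:symmetry_llc}(ii)--(iii), gives $\lambda_l = m_l d_{s,l}/2$ against the baseline $m_l d_l/2$. Hence $\Delta\lambda_l = m_l(d_l - d_{s,l})/2$. The symmetry space here is $S_{\mathcal{M}_l}$, of dimension $m_l(d_l - d_{s,l})$, by the rank--nullity computation in the Notation.

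\textbf{Step 3: independence.} The formula involves only $m_l$, $d_l$ and $d_{s,l}$. The quantity $d_{s,l}$ is fixed by the normalization type alone, since $\mathcal{M}_l$ is the image of $\mathrm{norm}_l$ and is not affected by the frozen values of the other layers. One caveat must be checked: the non-degeneracy needs the prefix $h_l$ to produce data that span the full $\mathrm{span}(\mathcal{M}_l)$, for instance a non-collapsed prefix. Under that condition the drop at layer $l$ is the same for every choice of the other layers, which gives the claimed independence.

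\textbf{Main obstacle.} The only genuinely delicate point is Step 1. A non-identity $g_l$ could create additional degeneracy or destroy the Morse--Bott structure, and a degenerate prefix could shrink the effective span below $d_{s,l}$. Both issues are absorbed either by the conditional hypothesis or by the non-degeneracy assumption, so I would state them explicitly as the standing assumptions of the corollary.
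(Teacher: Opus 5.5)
Your proposal is correct and follows the paper's own inline argument: assume the conjecture, so that freezing everything except $W_l$ reduces the problem locally to the single-layer case; then apply Corollary~\ref{cor:llc_drop} to get $\Delta\lambda_l = m_l(d_l - d_{s,l})/2$; and note that independence holds because this formula involves only quantities of layer $l$. The paper leaves implicit the two standing assumptions you make explicit: that the pushforward input distribution is non-degenerate and spans all of $\mathrm{span}(\mathcal{M}_l)$ (no collapsed prefix), and that the frozen suffix preserves the Morse--Bott structure with full normal rank. Stating them is a sensible tightening and does not depart from the paper's approach.
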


\begin{remark}
The motivation for the wrLLC protocol in Corollary~\ref{cor:additivity} differs from its original use in multi-layer networks \cite{wrLLC}. In the single-layer experiments of Section~\ref{sec:normalization}, there are no adjacent matrix factors and hence no $GL(r)$ scaling symmetry to eliminate; wrLLC simply identifies the target weight matrix $W$. In a multi-layer setting, freezing all other weights additionally removes the $GL(r)$ invariances inherent in adjacent factorizations, making the two isolation benefits structurally independent.
\end{remark}

\begin{remark}[wrLLC as a lower bound on degeneracy]
The wrLLC measurement provides a lower bound on the degeneracy that $W_l$ contributes to the full network. The argument is direct: any $U \in S_{\mathcal{M}_l}$ satisfies $Ux = 0$ for all $x \in \mathcal{M}_l$, so $(W_l + U)x = W_l x$ on all training data. Every downstream operation---nonlinearities, subsequent layers, the loss---sees the identical input and produces the identical output, so $U$ is a genuine symmetry of the full network loss, not only of the isolated layer. Joint effects such as $GL(r)$ scaling symmetries between adjacent layers can only add further symmetries on top. Hence the full network has at least $m_l(d_l - d_{s,l})$ degenerate directions attributable to $W_l$ alone. If $p_\text{total}$ denotes the total parameter count, the global RLCT satisfies
\[
  \lambda_\text{global} \leq \frac{p_\text{total} - m_l(d_l - d_{s,l})}{2},
\]
meaning the $m_l d_l$ degrees of freedom of $W_l$ contribute at most $\mathrm{wrLLC}(W_l) = m_l d_{s,l}/2$ to the global RLCT, rather than the naive $m_l d_l/2$.
\end{remark}

\subsection{Supporting Geometric Results}
The following results are mathematically straightforward, but we include proof sketches as these exact statements do not appear verbatim in the literature.

\begin{theorem}[Rank Manifold Codimension~{\cite[Ch.~1, \S4, Exercise~13]{GuilleminPollack1974}}]
\label{thm:rank_manifold}
The set $\mathcal{R}_r := \{ A \in \mathbb{R}^{m \times d} \mid \mathrm{rk}(A) = r \}$ is a smooth submanifold of $\mathbb{R}^{m \times d}$ of codimension $(m-r)(d-r)$.
\end{theorem}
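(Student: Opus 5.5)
The plan is to show that $\mathcal{R}_r$ is locally the zero set of a submersion. I will use the Schur complement to write down local charts explicitly.

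\textbf{Step 1: local normal form.} Fix $A_0 \in \mathcal{R}_r$. After permuting rows and columns, some $r\times r$ minor of $A_0$ is invertible. Permutations are linear isomorphisms of $\mathbb{R}^{m\times d}$ that preserve rank, so they do not affect the conclusion. Without loss of generality, then, we may take
\[
A = \begin{pmatrix} P & Q \\ R & S \end{pmatrix}
\]
with $P \in \mathbb{R}^{r\times r}$ invertible at $A_0$. Since invertibility is an open condition, $P$ remains invertible on an open neighborhood $V$ of $A_0$.

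\textbf{Step 2: Schur complement characterization.} On $V$, factor
\[
A = \begin{pmatrix} I & 0 \\ RP^{-1} & I \end{pmatrix}
\begin{pmatrix} P & 0 \\ 0 & S - RP^{-1}Q \end{pmatrix}
\begin{pmatrix} I & P^{-1}Q \\ 0 & I \end{pmatrix}.
\]
The outer factors are invertible, so $\mathrm{rk}(A) = r + \mathrm{rk}(S - RP^{-1}Q)$. Consequently,
\[
\mathcal{R}_r \cap V = F^{-1}(0), \qquad F(A) := S - RP^{-1}Q \in \mathbb{R}^{(m-r)\times(d-r)}.
\]

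\textbf{Step 3: submersion.} The map $F$ is smooth on $V$, since matrix inversion is smooth. Its derivative in the $S$-direction is the identity on $\mathbb{R}^{(m-r)\times(d-r)}$, because $F$ depends on $S$ as $S$ plus terms independent of $S$. Hence $dF$ is surjective at every point of $V$. By the regular value (preimage) theorem, $F^{-1}(0)$ is a smooth submanifold of $V$ with codimension $\dim \mathbb{R}^{(m-r)\times(d-r)} = (m-r)(d-r)$.

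Because $A_0$ was arbitrary, these local pieces cover $\mathcal{R}_r$, and smoothness is a local property. So $\mathcal{R}_r$ is an embedded submanifold of codimension $(m-r)(d-r)$. As a consistency check, the dimension is $md - (m-r)(d-r) = r(m+d-r)$, which matches the standard count.

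\textbf{Main obstacle.} There is no deep obstacle here. The one step requiring care is Step 2: the rank identity must hold on the whole neighborhood $V$, not just at $A_0$. This is exactly why $V$ is chosen so that $P$ is invertible throughout. Given that, the factorization is valid at every point of $V$ and identifies $\mathcal{R}_r \cap V$ precisely with $F^{-1}(0)$, with no spurious points added or lost.
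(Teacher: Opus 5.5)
Your proposal is correct and follows the same route as the paper's sketch: you use the local Schur complement map $S - RP^{-1}Q$ (the paper's $E - DB^{-1}C$), note that its derivative in the lower-right block is the identity, and apply the regular value theorem. Your block factorization establishes $\mathrm{rk}(A) = r + \mathrm{rk}(S - RP^{-1}Q)$ on the whole neighborhood where $P$ stays invertible, which fills in the one step the paper's sketch leaves implicit.
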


\begin{proof}[Proof sketch]
The result is posed as an exercise in \cite{GuilleminPollack1974} with the hint to consider the Schur complement map $\Phi(A) := E - DB^{-1}C \in \mathbb{R}^{(m-r)\times(d-r)}$, where $B, C, D, E$ are the blocks of $A$ after permuting so that the top-left $r\times r$ block is invertible. Since $\partial\Phi/\partial E = \mathrm{Id}$, the differential $d\Phi$ is surjective, and the result follows by the regular value theorem.
\end{proof}

Combining the symmetry space dimension (Notation above) and Theorem~\ref{thm:rank_manifold}, we establish the dimension of the constrained weight manifold:

\begin{theorem}[Constrained Weight Manifold Dimension]
\label{thm:constrained_weight}
Let $\mathcal{M} \subset \mathbb{R}^d$ with $d_s = \dim(\mathrm{span}(\mathcal{M}))$, and let
\[ W_{\mathcal{M},r} := \{ W \in \mathbb{R}^{m \times d} \mid \mathrm{rk}(W) = r,\ Wx = 0\ \forall\, x \in \mathcal{M} \}. \]
Then $\dim(W_{\mathcal{M},r}) = r(d - d_s + m - r)$.
\end{theorem}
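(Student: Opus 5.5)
The plan is to reduce the statement to Theorem~\ref{thm:rank_manifold} by a linear change of coordinates. Write $V := \mathrm{span}(\mathcal{M})$, so $\dim V = d_s$, and set $k := d - d_s$. The condition $Wx = 0$ for all $x \in \mathcal{M}$ is linear in $x$, so it is equivalent to $W|_V = 0$, i.e.\ to $W \in S_{\mathcal{M}}$. Equivalently, $V \subseteq \ker W$. This is the same observation behind the dimension count $\dim S_{\mathcal{M}} = mk$ in the Notation above.

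\textbf{Step 1: choose coordinates.} Pick an invertible $Q \in \mathbb{R}^{d\times d}$ whose first $d_s$ columns form a basis of $V$. The map $W \mapsto WQ$ is a linear automorphism of $\mathbb{R}^{m\times d}$ and preserves rank. Under it, the constraint $W|_V = 0$ becomes the statement that the first $d_s$ columns of $WQ$ vanish. Hence $WQ = [\,0 \mid A\,]$ with $A \in \mathbb{R}^{m\times k}$ arbitrary. This identifies $S_{\mathcal{M}}$ linearly with $\mathbb{R}^{m\times k}$.

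\textbf{Step 2: match ranks and apply Theorem~\ref{thm:rank_manifold}.} Since $\mathrm{rk}(W) = \mathrm{rk}([\,0\mid A\,]) = \mathrm{rk}(A)$, the map $W \mapsto A$ restricts to a diffeomorphism (indeed a restriction of a linear isomorphism)
\[ W_{\mathcal{M},r} \;\cong\; \{A \in \mathbb{R}^{m\times k} \mid \mathrm{rk}(A) = r\}. \]
Applying Theorem~\ref{thm:rank_manifold} with $d$ replaced by $k$, the right-hand side is a smooth submanifold of $\mathbb{R}^{m\times k}$ of codimension $(m-r)(k-r)$. Its dimension is therefore
\[ mk - (m-r)(k-r) = r(m + k - r) = r(d - d_s + m - r), \]
as claimed. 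Because the identification is linear, $W_{\mathcal{M},r}$ is also a smooth submanifold of $\mathbb{R}^{m\times d}$; it lies inside the linear subspace $S_{\mathcal{M}}$.

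\textbf{Main obstacle.} The only delicate point is the range of $r$, which I would state explicitly in the proof. Theorem~\ref{thm:rank_manifold} is meaningful only for $0 \le r \le \min(m, k)$. For $r > \min(m, d - d_s)$ the set $W_{\mathcal{M},r}$ is empty, since the rank of $W$ is bounded by $\dim(\mathbb{R}^d/V) = d - d_s$. The dimension formula should therefore be read under the hypothesis $r \le \min(m, d-d_s)$. The case $r = 0$ gives the point $\{0\}$, consistent with the formula. Beyond this, the argument is routine once one notes that "vanishing on $\mathcal{M}$" depends only on $\mathrm{span}(\mathcal{M})$.
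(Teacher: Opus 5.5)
Your proof is correct and follows the paper's argument essentially verbatim: change basis so that $\mathrm{span}(\mathcal{M})$ occupies the first $d_s$ coordinates, observe that those $d_s$ columns must vanish, and apply Theorem~\ref{thm:rank_manifold} to the remaining $m\times(d-d_s)$ block to get $r(d-d_s+m-r)$. Your explicit restriction $0 \le r \le \min(m, d-d_s)$ is a worthwhile clarification that the paper's sketch omits, since outside that range $W_{\mathcal{M},r}$ is empty and the formula does not apply.
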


\begin{proof}[Proof sketch]
Choose a basis for $\mathbb{R}^d$ aligning $\mathrm{span}(\mathcal{M})$ with the first $d_s$ coordinate directions. The constraint $Wx = 0$ for all $x \in \mathcal{M}$ then reduces to $W_1 = 0$ (the first $d_s$ columns vanish), so $W_{\mathcal{M},r} \cong \mathcal{R}_r(m \times (d-d_s))$. The dimension $r(d - d_s + m - r)$ follows directly from Theorem~\ref{thm:rank_manifold}.
\end{proof}

\begin{remark}
Theorem~\ref{thm:constrained_weight} is not used in the proof of Theorem~\ref{lem:symmetry_llc} or its corollaries. It is a standalone geometric result characterizing the dimension of the rank-$r$ constrained weight manifold when data lies on a lower-dimensional subspace. Its relevance to LoRA~\cite{Hu2021LoRA} settings --- where $W$ is explicitly constrained to rank $r$ --- is a structural observation; we do not experimentally validate this connection here.
\end{remark}

\subsection{Generalization to Affine Transformations}
The results established above apply to strictly linear weight matrices. However, as demonstrated in Section 2.3, neural network layers often include explicit bias terms or operate on affine manifolds (e.g., $f(x) = Wx + b$). The extension to the affine case is immediate via homogeneous coordinates:

\begin{remark}[Affine symmetry space]
Let $\mathcal{M} \subset \mathbb{R}^d$ have affine span $d_a$. Augmenting $x \mapsto \tilde{x} = [x^\top, 1]^\top \in \mathbb{R}^{d+1}$ converts the affine condition $Ux + c = 0$ into the linear condition $\tilde{U}\tilde{x} = 0$ for $\tilde{U} = [U, c]$. The augmented manifold $\tilde{\mathcal{M}}$ has linear span $d_a + 1$, so the rank-nullity argument of the Notation gives
\[ \dim(S_{\mathcal{M}}^{\mathrm{affine}}) = m\bigl((d+1) - (d_a+1)\bigr) = m(d - d_a). \]
\end{remark}

\begin{corollary}[Affine LLC Drop]
\label{cor:affine_llc}
Under the same conditions as Theorem~\ref{lem:symmetry_llc} (single linear layer with bias, squared loss, realizable teacher, non-degenerate input distribution), applied in homogeneous coordinates: let $f(x) = Wx + b$ with $W \in \mathbb{R}^{m \times d}$, $b \in \mathbb{R}^m$, and let the input data lie on a manifold $\mathcal{M} \subset \mathbb{R}^d$ with affine span $d_a$. Then the parameter pair $(W, b)$ admits a continuous symmetry space of dimension $m(d - d_a)$, and
\[ \Delta\lambda = \frac{m(d - d_a)}{2}. \]
\end{corollary}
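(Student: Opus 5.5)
The plan is to reduce Corollary~\ref{cor:affine_llc} to Theorem~\ref{lem:symmetry_llc} via homogeneous coordinates, as the Remark on the affine symmetry space suggests. Set $\tilde{x} = [x^\top, 1]^\top \in \mathbb{R}^{d+1}$ and $\tilde{W} = [W, b] \in \mathbb{R}^{m\times(d+1)}$. Then $f(x) = Wx + b = \tilde{W}\tilde{x}$, so the affine model on $\mathcal{M}$ is exactly a strictly linear model with parameter $\tilde{W}$ acting on the lifted manifold $\tilde{\mathcal{M}} = \{[x^\top,1]^\top : x\in\mathcal{M}\}\subset\mathbb{R}^{d+1}$. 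The realizable teacher $(W^*,b^*)$ lifts to $\tilde{W}^* = [W^*, b^*]$, and the squared loss is unchanged. Hence everything reduces to computing $\dim\mathrm{span}(\tilde{\mathcal{M}})$ and invoking Theorem~\ref{lem:symmetry_llc} with ambient dimension $d+1$.

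The first step is to show $\dim\mathrm{span}(\tilde{\mathcal{M}}) = d_a + 1$. Pick $x_0\in\mathcal{M}$. The affine span of $\mathcal{M}$ is $x_0 + V$ with $\dim V = d_a$, where $V$ is spanned by the differences $x - x_0$. Then $\mathrm{span}(\tilde{\mathcal{M}})$ contains $\tilde{x}_0$ and all $\tilde{x} - \tilde{x}_0 = [(x-x_0)^\top, 0]^\top$, so it equals $\mathrm{span}(\tilde{x}_0)\oplus (V\times\{0\})$. The sum is direct because $\tilde{x}_0$ has last coordinate $1$. Its dimension is therefore $d_a+1$.

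By the Notation's rank-nullity count in $\mathbb{R}^{d+1}$, the symmetry space $S_{\tilde{\mathcal{M}}}$ has dimension $m((d+1)-(d_a+1)) = m(d-d_a)$. Explicitly, it consists of pairs $(U,c)$ with $Ux + c = 0$ on $\mathcal{M}$. For the simplex this recovers $(c\mathbf{1}^\top,-c)$ in Proposition~\ref{prop:simplex_duality}(ii). Parts (i)--(ii) of Theorem~\ref{lem:symmetry_llc} then give $\tilde{W}_0 = \tilde{W}^* + S_{\tilde{\mathcal{M}}}$, of codimension $m(d_a+1)$.

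The main obstacle is part (iii): verifying non-degeneracy of the normal Hessian in lifted coordinates. The population loss is $\|(\tilde{W}-\tilde{W}^*)\tilde{\Sigma}^{1/2}\|_F^2$, where $\tilde{\Sigma} = \mathbb{E}[\tilde{x}\tilde{x}^\top]$ is the uncentered second moment of the lifted input. We need $\tilde{\Sigma}$ to be positive definite on $\mathrm{span}(\tilde{\mathcal{M}})$. This holds iff no nonzero $v = (u,c)$ in that span has $u^\top x + c = 0$ almost surely. Equivalently, the input distribution's support must not lie in a proper affine subspace of $\mathrm{aff}(\mathcal{M})$. I would take this as the meaning of ``non-degenerate input distribution'' in the affine setting, and check that it is implied by full-dimensional support within $\mathrm{aff}(\mathcal{M})$. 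With that, Watanabe's Morse--Bott formula gives $\lambda = m(d_a+1)/2$.

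The baseline must be the affine model on full-rank data, with $d+1$ parameters per output row, so $\lambda_0 = m(d+1)/2$. This yields $\Delta\lambda = m(d-d_a)/2$, which is the claimed drop.
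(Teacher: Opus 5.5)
Your proposal is correct and follows the paper's route. The paper lifts to homogeneous coordinates $\tilde{x} = [x^\top, 1]^\top$, $\tilde{W} = [W, b]$, and uses that $\tilde{\mathcal{M}}$ has linear span $d_a + 1$ to get the $m(d-d_a)$-dimensional symmetry space by rank-nullity; it then applies Theorem~\ref{lem:symmetry_llc} in $\mathbb{R}^{d+1}$ against the affine baseline $m(d+1)/2$, and you do the same, while also making explicit what the paper leaves implicit: the direct-sum argument for $\dim\mathrm{span}(\tilde{\mathcal{M}}) = d_a+1$, the uncentered second-moment non-degeneracy condition on $\mathrm{span}(\tilde{\mathcal{M}})$, and the choice of baseline.
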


In particular, \texttt{Softmax} maps its input to the standard simplex, which has affine span $d_a = d-1$, yielding $\Delta\lambda = m/2$ whenever the layer includes an explicit bias. A strictly linear layer ($b = 0$) is unaffected, as the simplex's linear span remains $d$.

\begin{remark}[Symmetry-type matching principle]
Corollaries~\ref{cor:llc_drop} and~\ref{cor:affine_llc} together express a single principle: for data confined to a hyperplane $\{n^\top x = c\} \subset \mathbb{R}^d$, an LLC drop of $m/2$ occurs if and only if the transformation type has at least as much \emph{affinity} as the constraint. A strictly linear layer $Wx$ is symmetry-coupled only when $c = 0$ (the hyperplane passes through the origin); an affine layer $Wx+b$ is symmetry-coupled for any $c$, including $c = 0$. When $c \neq 0$, a strictly linear transformation admits no symmetry because the offset cannot be cancelled without a free bias term. Data not confined to any hyperplane (e.g.\ a sphere) has full affine span and yields $\Delta\lambda = 0$ under either transformation type.
\end{remark}

\section{The Geometric Threshold: Curvature vs.\ Flatness}\label{sec:curvature}

The main theorem proves that affine flatness is sufficient for an LLC drop: a holonomic constraint that eliminates a full dimension of linear span costs exactly $m/2$ from the LLC. This suggests a \emph{binary} threshold --- any non-zero curvature, regardless of sign or magnitude, should be sufficient to preserve the LLC entirely. We test this converse direction by training on manifolds with systematically varied geometry.

\subsection{Setup}
All experiments use a single linear layer $W \in \mathbb{R}^{m \times d}$, MSE loss with a teacher-student setup, Gaussian inputs mapped onto the target manifold, $N=2000$ samples, 5 seeds (median reported). Full experimental details are given in Section~\ref{sec:normalization}. We use $d=5$, $m=4$ throughout, giving a theoretical $\Delta\lambda = m/2 = 2.0$ for flat manifolds. LLC is estimated via SGLD with the devinterp library~\cite{devinterpcode} (3 chains, 4000 burn-in + 4000 recorded draws, localization $\gamma = 2/p$, lr $= 5 \times 10^{-4}$). We compare the LLC against an unconstrained Gaussian baseline ($\lambda_\text{Gaussian} = 12.045 \pm 0.283$).

\subsection{Block A: The Binary Curvature Threshold}
We test three qualitatively different curvature regimes, each a $(d-1)$-dimensional manifold embedded in $\mathbb{R}^d$:
\begin{itemize}
    \item \textbf{Positive curvature:} Paraboloid ($x_d = \|x_{<d}\|^2$) and Hyperboloid ($x_d = \sqrt{1 + \|x_{<d}\|^2}$).
    \item \textbf{Negative curvature:} Saddle ($x_d = x_1^2 + x_2^2 - x_3^2 - x_4^2$).
    \item \textbf{Zero curvature:} Flat hyperplane ($x_d = 0$).
\end{itemize}

Results are shown in Table~\ref{tab:curvature_results} and Figure~\ref{fig:appendix_curvature} (Left). All three curved manifolds yield LLC within $0.15$ of the Gaussian baseline, confirming $\Delta\lambda \approx 0$. The flat hyperplane drops to $9.939 \pm 0.146$, giving $\Delta\lambda = 2.106 \approx m/2 = 2.0$. Curvature sign is irrelevant: what matters is only whether the manifold is flat.

\begin{table}[ht]
    \centering
    \caption{\textbf{LLC by manifold geometry.} $d=5$, $m=4$, $N=2000$, 5 seeds, median $\pm$ std. Theory: curved $\to \Delta\lambda = 0$; flat $\to \Delta\lambda = m/2 = 2.0$.}
    \vspace{2mm}
    \begin{tabular}{lccc}
        \toprule
        \textbf{Manifold} & \textbf{LLC (median $\pm$ std)} & \textbf{$\Delta\lambda$} & \textbf{Predicted} \\
        \midrule
        Gaussian baseline (unconstrained) & $12.045 \pm 0.283$ & --- & --- \\
        Paraboloid (positive curvature)   & $11.950 \pm 0.222$ & $+0.10$ & $0$ \\
        Hyperboloid (positive curvature)  & $11.904 \pm 0.362$ & $+0.14$ & $0$ \\
        Saddle (negative curvature)       & $12.114 \pm 0.162$ & $-0.07$ & $0$ \\
        \textbf{Flat hyperplane}          & $\mathbf{9.939 \pm 0.146}$  & $\mathbf{+2.11}$ & $\mathbf{2.0}$ \\
        \bottomrule
    \end{tabular}
    \label{tab:curvature_results}
\end{table}

\subsection{Block B: Finite-Sample Phase Transition}
Theory predicts the threshold is sharp (binary). In practice, with finite data and SGLD noise, curvature must exceed a critical scale to register as non-flat. We quantify this by perturbing a flat hyperplane with a Gaussian bump: $x_d = A\exp(-\alpha\|x_{<d}\|^2)$, varying amplitude $A$ and width $\alpha$.

Two regimes (Table~\ref{tab:bump_results}, Figure~\ref{fig:appendix_curvature} Right):
\begin{itemize}
    \item \textbf{Wide bump ($\alpha = 0.1$):} curvature is spread across the manifold. LLC rises smoothly with $A$, crossing above the flat bound near $A^* \approx 0.1$.
    \item \textbf{Narrow bump ($\alpha = 10.0$):} curvature is concentrated at the origin; most data points land on the flat region. The LLC remains near the flat bound until $A \geq 1.0$, despite the manifold being theoretically non-flat for any $A > 0$.
\end{itemize}

This characterizes a limitation of SGLD-based LLC estimation: the sampler is blind to geometric structure below its thermal noise floor. The measured LLC is not just a function of whether curvature exists, but of \emph{how much of the data distribution experiences it}. Below a critical amplitude that depends on the data--curvature overlap, the sampler cannot distinguish the manifold from flat. The true RLCT remains binary (as the algebraic theory predicts); the smooth crossover is a property of the estimator, not the geometry.

\begin{table}[ht]
    \centering
    \caption{\textbf{LLC vs.\ Gaussian bump amplitude.} Flat baseline $= 9.939$. $\delta = \text{LLC} - \text{LLC}_\text{flat}$.}
    \vspace{2mm}
    \begin{tabular}{ccccc}
        \toprule
        & \multicolumn{2}{c}{\textbf{Wide ($\alpha=0.1$)}} & \multicolumn{2}{c}{\textbf{Narrow ($\alpha=10.0$)}} \\
        \cmidrule(lr){2-3} \cmidrule(lr){4-5}
        $A$ & LLC & $\delta$ & LLC & $\delta$ \\
        \midrule
        0.01 & 9.941 & $+0.002$ & 9.939 & $+0.000$ \\
        0.10 & 9.982 & $+0.043$ & 9.940 & $+0.001$ \\
        0.20 & 10.073 & $+0.133$ & 9.944 & $+0.004$ \\
        0.50 & 10.540 & $+0.601$ & 9.965 & $+0.026$ \\
        1.00 & 11.261 & $+1.322$ & 10.041 & $+0.102$ \\
        2.00 & 11.710 & $+1.771$ & 10.267 & $+0.328$ \\
        \bottomrule
    \end{tabular}
    \label{tab:bump_results}
\end{table}

\begin{figure}[ht]
    \centering
    \includegraphics[width=\textwidth]{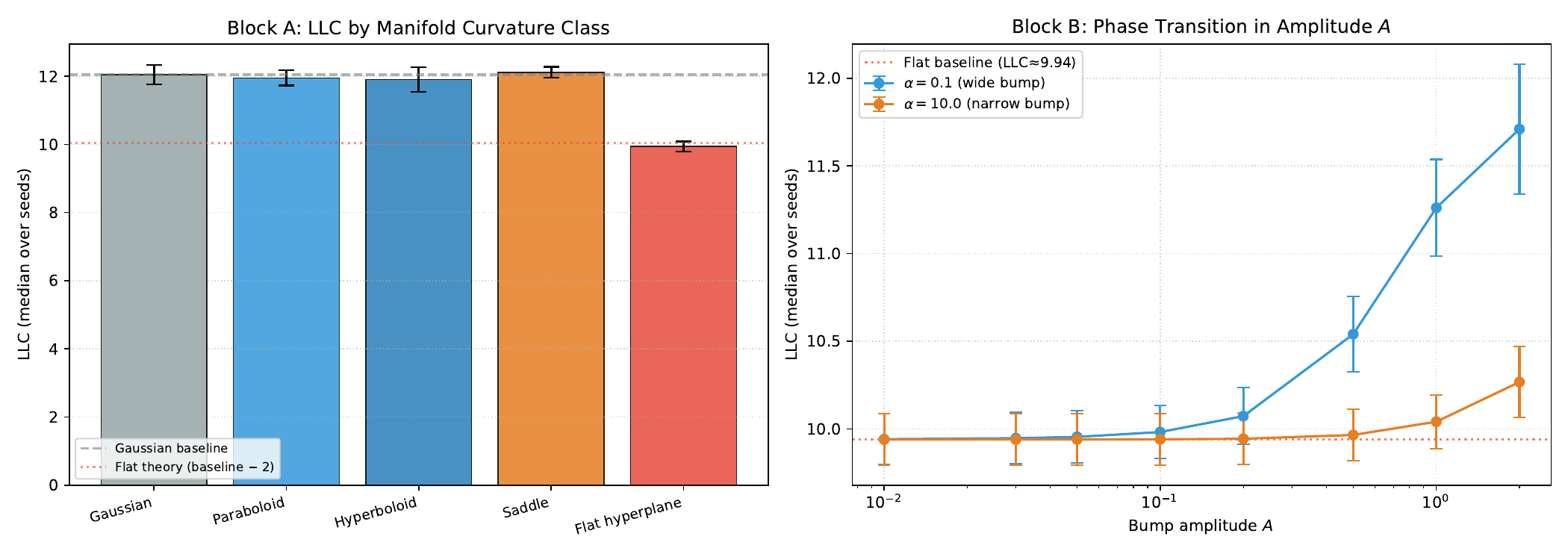}
    \caption{\textbf{Curvature geometry and effective LLC.} \textbf{Left:} Median LLC per manifold class ($d=5$, $m=4$, 5 seeds). All curved surfaces match the Gaussian baseline ($\approx 12.0$); only the flat hyperplane drops to $\approx 9.9 \approx 12.0 - m/2$. Curvature sign is irrelevant: the threshold is between flat and non-flat. \textbf{Right:} LLC vs.\ bump amplitude $A$ for wide ($\alpha=0.1$) and narrow ($\alpha=10.0$) bumps. The wide bump recovers near $A^* \approx 0.1$; the narrow bump stays near the flat bound until $A \geq 1.0$, showing that the effective RLCT depends on data--curvature overlap, not just the existence of curvature.}
    \label{fig:appendix_curvature}
\end{figure}

\section{Application to Normalization Layers}\label{sec:normalization}

To empirically verify the bounds established in Theorem~\ref{lem:symmetry_llc} and Corollary~\ref{cor:llc_drop}, we must isolate the geometric cost of normalization from the inherent degeneracies of neural networks. The key challenge is that a multi-layer network exposes unbounded continuous symmetries — such as the $GL(r)$ scaling invariances inherent in adjacent matrix factorizations — which flood the SGLD sampler and make clean LLC estimation intractable. We eliminate this problem by restricting to a minimal single-layer architecture where the \textit{only} source of degeneracy is the holonomic constraint imposed by the normalization under study.

\subsection{Minimal Single-Layer Architecture}
Our model consists of a single linear layer $W \in \mathbb{R}^{m \times d}$ (no bias) optionally preceded by a normalization operation:
\[ f(x) = W \cdot \phi(x) \]
where $\phi$ is one of three conditions: (1) \textbf{Baseline} ($\phi = \text{identity}$), (2) \textbf{LayerNorm} (zero-mean, unit-variance, no learned affine parameters), or (3) \textbf{RMSNorm} (unit-RMS, no learned affine parameters). Because there is only a single weight matrix and no downstream frozen components, $W$ is the sole source of any structural degeneracy. Any LLC drop measured relative to the Baseline is therefore uniquely attributable to the data manifold's geometric constraint.

\subsection{Teacher-Student Setup}
We construct a Teacher-Student setup to guarantee a perfectly realizable target, ensuring the student can reach a true global minimum ($L \approx 0$). For each condition, a Teacher model with the same architecture is randomly initialized (weights drawn from $\mathcal{N}(0,1)$) and frozen. Inputs $X \in \mathbb{R}^{N \times d}$ ($N=2000$) are drawn from $\mathcal{N}(0, I_d)$ and targets are set to $Y = \text{teacher}(X)$. The Student is then trained on $(X, Y)$ via MSE loss until convergence, ensuring it finds the same zero-loss manifold as the Teacher.

\subsection{Training and Convergence}
The Student model is optimized using Adam~\cite{kingma2014adam} with cosine annealing~\cite{loshchilov2016sgdr} (initial lr $= 10^{-2}$, minimum lr $= 10^{-5}$, 1000 epochs, batch size 256). After training, we verify convergence by checking that the final MSE loss is below $10^{-4}$. Seeds that fail this criterion are flagged as poorly converged; no such failures were observed in the experiments reported here.

\subsection{SGLD \texorpdfstring{$\lambda$}{lambda} Estimation}
After training, we estimate the LLC using SGLD~\cite{10.5555/3104482.3104568} via the \texttt{devinterp} library~\cite{devinterpcode}. To keep the per-parameter spring energy constant regardless of model size, we use an adaptive localization $\gamma = 2.0 / (m \cdot d)$, calibrated to the reference value of $\gamma = 0.1$ that works well for $p = 20$ parameters (since $2/20 = 0.1$). We run 3 chains of 4000 burn-in steps followed by 4000 recorded draws, with SGLD learning rate $5 \times 10^{-4}$.

To suppress the occasional divergent chain, we repeat each configuration over 5 independent seeds and report the \textbf{median} $\Delta\lambda = \lambda_\text{Baseline} - \lambda_\text{norm}$ across seeds, together with the standard deviation as a measure of per-seed variance.

\subsection{Scaling Experiments}
We conduct two systematic sweeps to verify the scaling predictions of Corollary 1:
\begin{itemize}
    \item \textbf{Experiment 1 (vary $m$):} Output dimension $m \in \{2, 4, 6, 8, 10\}$ with input dimension fixed at $d = 12$. Theory predicts $\Delta\lambda = m/2$.
    \item \textbf{Experiment 2 (vary $d$):} Input dimension $d \in \{6, 10, 14, 18\}$ with output dimension fixed at $m = 4$. Theory predicts $\Delta\lambda = 2.0$ regardless of $d$.
\end{itemize}

\subsection{Softmax Degeneracy Experiment}
To empirically validate Corollary~\ref{cor:affine_llc}, we evaluate a separate configuration where the input data $X$ is drawn from the affine hyperplane $\{x \in \mathbb{R}^d \mid \mathbf{1}^\top x = 1\}$ rather than a Gaussian. Concretely, each sample is generated as $x = z - \bar{z}\,\mathbf{1} + \frac{1}{d}\mathbf{1}$ where $z \sim \mathcal{N}(0, I)$, ensuring $\mathbf{1}^\top x = 1$ while preserving the spread of a standard Gaussian. This captures the essential affine geometry of \texttt{Softmax} outputs (which satisfy the same constraint) while keeping the data variance high enough for reliable LLC estimation. We use the same single-layer architecture ($d=5, m=4$) and compare three downstream configurations: (1) strictly linear ($b=0$), (2) explicit bias ($y = Wx + b$), and (3) \texttt{LayerNorm} post-projection. This isolates whether the simplex geometry triggers the ``smuggled bias'' degeneracy predicted by the affine symmetry analysis.

\section{Empirical Results}

\subsection{LayerNorm Reduces LLC by \texorpdfstring{$m/2$}{m/2}}
The LLC drop induced by \texttt{LayerNorm} is consistent with the $m/2$ prediction across all tested output dimensions (Figure~\ref{fig:scaling}, Left), in agreement with Corollary~\ref{cor:llc_drop}. Because \texttt{LayerNorm} mean-centers the data, it projects $x$ onto the hyperplane $\{v \in \mathbb{R}^d \mid \mathbf{1}^\top v = 0\}$, restricting the linear span to $d_s = d-1$ and introducing an $m$-dimensional symmetry space in $W$. Each lost dimension costs exactly $1/2$ from the LLC.

At $m=10$ the estimator breaks down: seeds cluster near $3$ and $7$--$8$ separately, yielding a median $\Delta\lambda = 3.68 \pm 2.07$ against a theoretical prediction of $5.0$. The bimodal distribution indicates the SGLD sampler is not converging to a single stable estimate but settling into two distinct modes in the loss landscape. The theory predicts $\Delta\lambda = 5.0$; neither observed mode matches this. This is an estimator failure at larger parameter counts, not a confirmation of the theory; controlled single-layer experiments at this scale would require more chains, longer burn-in, or better-calibrated localization.

Figure~\ref{fig:scaling} (Right) confirms that the drop is independent of input dimension $d$: fixing $m=4$ gives $\Delta\lambda \approx 2.0$ for $d \in \{6, 10, 14, 18\}$. The LLC reduction depends only on the number of dimensions lost to the constraint, not on the ambient size of the manifold.

\begin{figure}[ht]
	\centering
	\includegraphics[width=\textwidth]{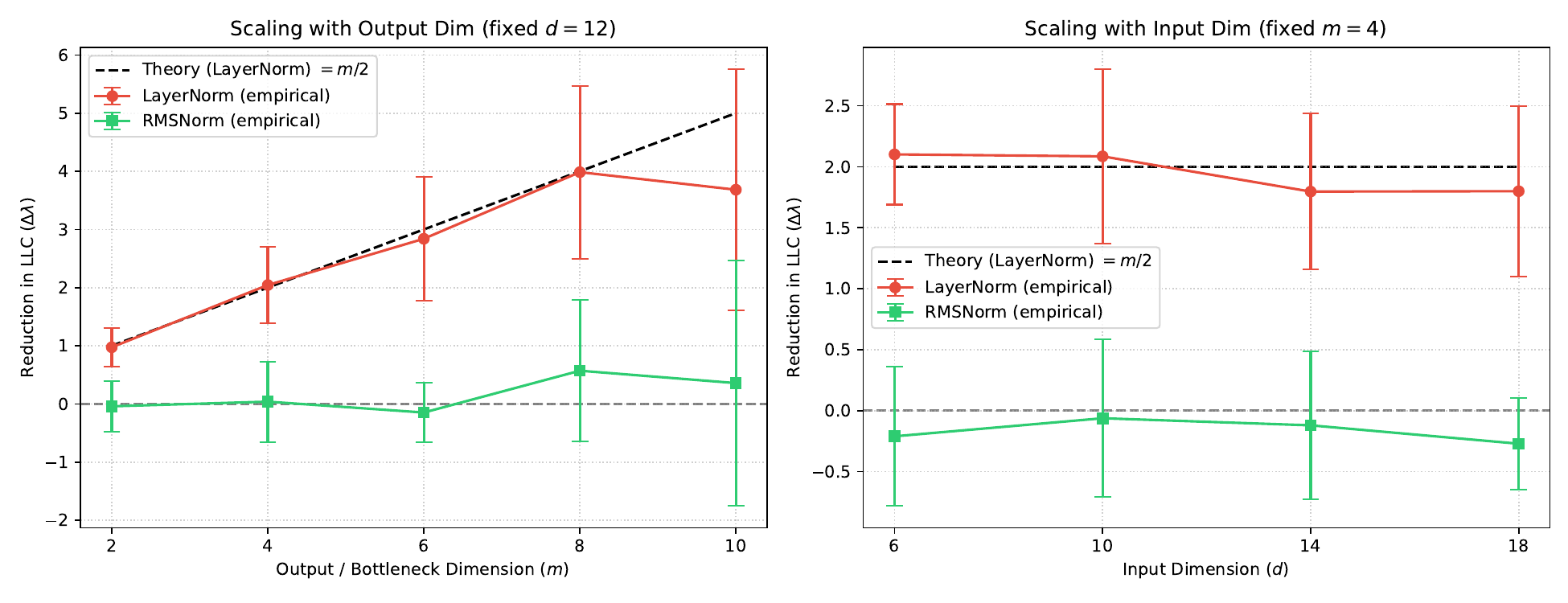}
	\caption{\textbf{LLC reduction under \texttt{LayerNorm} and \texttt{RMSNorm}.} \textbf{Left:} Varying output dimension $m$ with fixed $d=12$. $\Delta\lambda$ is consistent with the $m/2$ prediction for \texttt{LayerNorm} (with increasing variance at large $m$); \texttt{RMSNorm} shows $\Delta\lambda \approx 0$ throughout. \textbf{Right:} Varying input dimension $d$ with fixed $m=4$. $\Delta\lambda \approx 2.0$ regardless of $d$, confirming that the drop depends only on the number of dimensions lost to the constraint.}
	\label{fig:scaling}
\end{figure}

\subsection{RMSNorm Preserves LLC}
\texttt{RMSNorm} shows $\Delta\lambda \approx 0$ across all tested values of $m$ and $d$ (Figure~\ref{fig:scaling}). Without mean-centering, its outputs lie on a hypersphere with $d_s = d$: $W$ has no data-induced null space and no continuous symmetry is introduced. This gives a theoretical account of why \texttt{RMSNorm} provides normalization's training-stability benefits without reducing the effective parameter count of the subsequent weight matrix.

\subsection{Softmax and the Smuggled Bias}
Results are summarized in Table~\ref{tab:softmax_results}. With a strictly linear layer ($b=0$), the simplex data yields $\Delta\lambda \approx 0$: because the simplex does not pass through the origin, its linear span is $d$ and no symmetry space exists. The LLC drop is conditional, not automatic.

Adding an explicit bias activates the degeneracy. With $y = Wx + b$, the measured $\Delta\lambda = 2.26 \approx m/2 = 2.0$: the smuggled bias $b_\text{smuggled} = \frac{1}{d}W\mathbf{1}$ and the explicit bias $b$ are indistinguishable to the loss, collapsing $m$ parameters in accordance with Corollary~\ref{cor:affine_llc}. This result is robust across seeds (std $< 0.5$).

The post-projection \texttt{LayerNorm} case (Conjecture~\ref{conj:postln}) is not experimentally resolved: multi-seed estimation gives $\Delta\lambda = -0.92 \pm 0.86$, indicating that the SGLD-based Gaussian baseline is poorly calibrated for the non-affine \texttt{LN} loss landscape. We omit that row from the table and leave validation to future work.

\begin{table}[ht]
    \centering
    \caption{\textbf{LLC under Softmax simplex geometry.} $d=5$, $m=4$. $\Delta\lambda$ measured relative to Gaussian-input baseline (5-seed median). The simplex alone causes no drop; degeneracy requires a downstream operation sensitive to the affine offset. The Post-\texttt{LayerNorm} case is omitted: SGLD estimation is unreliable for the non-affine \texttt{LN} loss landscape (see Conjecture~\ref{conj:postln}).}
    \vspace{2mm}
    \begin{tabular}{lcccc}
        \toprule
        \textbf{Configuration} & \textbf{Baseline ($\lambda$)} & \textbf{Simplex ($\lambda$)} & \textbf{$\Delta \lambda$} & \textbf{Expected} \\
        \midrule
        Strictly Linear ($b=0$) & 10.38 & 10.62 & $-0.24$ & $0.0$ \\
        Explicit Bias ($+b$)    & 12.40 & 10.14 & $\mathbf{2.26}$ & $m/2 = 2.0$ \\
        \bottomrule
    \end{tabular}
    \label{tab:softmax_results}
\end{table}

\section{Conclusion}

The central finding is a geometric threshold: whether a normalization operation reduces the LLC of the subsequent weight matrix depends entirely on whether the data manifold is affinely flat, not on any continuous geometric quantity like curvature. \texttt{LayerNorm} confines data to a linear hyperplane ($\mathbf{1}^\top x = 0$, $d_s = d-1$), producing an LLC drop of $m/2$ (proved for single linear layers with squared loss); \texttt{RMSNorm} projects onto a sphere, which has full linear span, producing no drop. For the codimension-one manifolds we study, any non-zero curvature is sufficient to preserve the LLC --- the distinction appears binary, not graded.

This binary threshold has a smooth finite-sample counterpart. The effective RLCT measured by SGLD depends not just on whether curvature exists, but on how much of the data distribution actually experiences it. Wide, distributed curvature restores the LLC at amplitudes an order of magnitude smaller than narrow, concentrated curvature --- even when both are theoretically non-flat. This is an intrinsic feature of SGLD-based LLC estimation, not a failure of the algebraic theory; the true RLCT remains binary.

The wrLLC methodology makes the geometric cost of normalization precise and measurable in a single-layer experiment. Under the conditions of Theorem~\ref{lem:symmetry_llc} (single linear layer, squared loss), \texttt{LayerNorm} reduces $\Delta\lambda = m/2$ for any non-degenerate input distribution; \texttt{RMSNorm} preserves LLC by projecting onto a full-rank sphere; \texttt{Softmax} induces the same $m/2$ drop when paired with an explicit bias (proved, experimentally confirmed); the Post-\texttt{LayerNorm} case is a theoretical open problem. A natural open question is whether the LLC drop in $W_V$ under Pre-\texttt{LayerNorm} manifests as stronger attention sinks in practice. Our preliminary experiments on a small attention-only transformer found no significant difference between \texttt{LayerNorm} and \texttt{RMSNorm} conditions, suggesting this connection, if it exists, may require larger models or tasks that more directly expose the low-rank structure of $W_V$.

A further direction concerns \emph{grokking dynamics}: by removing degrees of freedom associated with memorization, \texttt{LayerNorm}'s LLC reduction may make the low-LLC generalizing circuit more accessible during training, potentially accelerating the grokking transition relative to \texttt{RMSNorm} or unnormalized baselines.

\bibliographystyle{unsrtnat}
\bibliography{references}

\appendix
\section{Homology vs.\ Curvature: Guaranteeing Affine Span}\label{app:homology}

In Section~\ref{sec:curvature} we observed empirically that LLC drops for affinely flat manifolds and is preserved for the curved manifolds we tested. Here we give the algebraic-topological explanation for why global topology (non-trivial homology) provides a particularly \emph{robust} guarantee of full linear span, and why this robustness is qualitatively stronger than the finite-sample guarantee provided by local curvature alone.

\texttt{RMSNorm} projects data onto a hypersphere $S^{d-1}$, which possesses a non-trivial top-homology group ($H_{d-1}(S^{d-1}) \neq 0$; see \cite{Hatcher2002}). The linear span of $S^{d-1}$ is full directly: for each standard basis vector $e_i$, both $e_i$ and $-e_i$ lie on $S^{d-1}$, so $\mathrm{span}(S^{d-1}) = \mathbb{R}^d$. Consequently $d_s = d$, and by Corollary~\ref{cor:llc_drop} no LLC is lost.

Conversely, the standard simplex generated by \texttt{Softmax} is contractible and strictly flat. Its affine span is $d-1$ because it lies entirely in the hyperplane $\{\sum_i x_i = 1\}$; unlike the sphere, no point and its antipode both belong to it, so there is no \emph{topological} mechanism forcing full linear span. (The simplex does have full linear span $\mathbb{R}^d$ through its vertices $e_1, \ldots, e_d$; the point is that this span is not guaranteed by any robust global property of the manifold, as it is for the sphere.) What the simplex lacks is not full linear span per se, but an affine structure that avoids the origin: its constraint $\mathbf{1}^\top x = 1$ ($c \neq 0$) is what creates the ``smuggled bias'' phenomenon discussed in Section~\ref{sec:theory}.

\subsection*{Local Curvature vs.\ Global Homology}

Section~\ref{sec:curvature} shows empirically that local curvature is \emph{sufficient} to preserve the LLC: all curved manifolds tested (paraboloid, hyperboloid, saddle) matched the Gaussian baseline, while the flat hyperplane dropped by $\Delta\lambda \approx m/2 = 2.0$ (Table~\ref{tab:curvature_results}). Among the manifolds tested, curvature sign is irrelevant; only flatness matters.

However, the Block B experiment (Table~\ref{tab:bump_results}) shows that local curvature and global homology differ sharply in their \emph{robustness at finite sample sizes}. When curvature is small or concentrated, the SGLD sampler cannot distinguish a weakly curved manifold from a flat one: the effective RLCT collapses to the flat bound even when the manifold is theoretically non-flat. This is a finite-sample phase transition in the estimated RLCT, not a failure of the algebraic theory.

Global non-trivial homology (as in $S^{d-1}$) is immune to this effect. A closed orientable hypersurface with non-trivial top-homology separates $\mathbb{R}^d$ into two components and cannot be embedded in any proper hyperplane --- unlike a curved but non-closed patch, which could in principle be continuously flattened. Furthermore, because normalization operations fix the macroscopic scale of the data (standardizing to unit RMS), the topological volume is locked well above the thermal noise floor of the SGLD sampler. We conjecture that non-trivial top-homology therefore provides an unconditional, scale-independent guarantee of LLC preservation, while local curvature provides only a conditional guarantee that breaks down at finite amplitude. We leave a formal proof of this conjecture to future work.

\end{document}